\newtheorem{theorem}{Theorem}[section]
\newtheorem{lemma}[theorem]{Lemma}
\newtheorem{proposition}[theorem]{Proposition}
\icmltitlerunning{Interferometric Graph Transform}
\begin{document}

\twocolumn[
\icmltitle{Interferometric Graph Transform:\\ a Deep Unsupervised Graph  Representation}

% It is OKAY to include author information, even for blind
% submissions: the style file will automatically remove it for you
% unless you've provided the [accepted] option to the icml2020
% package.

% List of affiliations: The first argument should be a (short)
% identifier you will use later to specify author affiliations
% Academic affiliations should list Department, University, City, Region, Country
% Industry affiliations should list Company, City, Region, Country

% You can specify symbols, otherwise they are numbered in order.
% Ideally, you should not use this facility. Affiliations will be numbered
% in order of appearance and this is the preferred way.
\icmlsetsymbol{equal}{*}

\begin{icmlauthorlist}
\icmlauthor{Edouard Oyallon}{cnrs}
\end{icmlauthorlist}

\icmlaffiliation{cnrs}{CNRS, LIP6}

\icmlcorrespondingauthor{Edouard Oyallon}{edouard.oyallon@lip6.fr}

% You may provide any keywords that you
% find helpful for describing your paper; these are used to populate
% the "keywords" metadata in the PDF but will not be shown in the document
\icmlkeywords{Graphs, Deep Learning, Unsupervised}

\vskip 0.3in
]

% this must go after the closing bracket ] following \twocolumn[ ...

% This command actually creates the footnote in the first column
% listing the affiliations and the copyright notice.
% The command takes one argument, which is text to display at the start of the footnote.
% The \icmlEqualContribution command is standard text for equal contribution.
% Remove it (just {}) if you do not need this facility.

%\printAffiliationsAndNotice{}  % leave blank if no need to mention equal contribution
\printAffiliationsAndNotice{}%\icmlEqualContribution} % otherwise use the standard text.

\begin{abstract}
We propose the Interferometric Graph Transform (IGT), which is a new class of deep unsupervised  graph convolutional neural network for building  graph representations.  Our first contribution is to propose a generic, complex-valued spectral graph architecture obtained from a generalization of the Euclidean Fourier transform. We show that our learned representation consists of both discriminative and invariant features, thanks to a novel greedy concave objective.  From our experiments, we conclude that our learning procedure exploits the topology of the spectral domain, which is normally a  flaw of spectral methods, and in particular our method can recover an analytic operator for vision tasks. We test our algorithm on various and challenging tasks such as image classification (MNIST, CIFAR-10), community detection (Authorship, Facebook graph) and action recognition from 3D skeletons videos (SBU, NTU), exhibiting a new state-of-the-art in spectral graph unsupervised settings.
\end{abstract}
\section{Introduction}
\label{introduction}
Recently, a huge interest has arisen in canonical representations for non-Euclidean domains, which has lead to the development of Graph Convolutional Neural Networks (GCNN) \cite{bronstein2017geometric,kipf2016variational}. They are useful to describe many  types of data: social networks \cite{wu2019comprehensive}, manifolds \cite{henaff2015deep}, 3D-skeletons \cite{mazari2019mlgcn}, molecules \cite{de2018molgan}, and text \cite{NIPS2016_6081}, etc. Mainly two classes of architectures address this representation learning task: on one side, the spatial graph convolution methods \cite{wu2019comprehensive} which rely on node neighborhoods of a given graph, and on the other side,  spectral methods \cite{defferrard2016convolutional}, which heavily rely on  spectral representations estimated from a given Laplacian operator. We consider the latter: in this setting, typical successful representations are obtained from deep GCNN, whose filters are often learned through supervision \cite{bronstein2017geometric}. We also note that GCNNs on regular grid exhibit a significant performance gap with their Euclidean domain counterpart, indicating that more effort must be done to incorporate efficiently this geometry. In this work, we propose a new class of spectral architecture which is unsupervisedly infered from a graph's Laplacian.

By design, standard spectral methods suffer from several inherent issues, which also apply to Euclidean domain. A first issue is the  lack of topology of the Laplacian's eigenvectors. For the sake of illustration, observe that for a smooth $f\in L^2(\mathbb{R}^k),k\geq 0$, the Fourier transform of its Laplacian satisfies:

\[\forall\omega\in\mathbb{R}^k, \widehat{\Delta f}(\omega)=-\Vert \omega\Vert^2\hat f(\omega)\,.\]

Here, the topology of the eigenbasis (e.g., a cosine family) is difficult to exhibit from its corresponding eigenvalues.  For instance, two rather different frequencies (e.g., $\omega_1\neq\omega_2$) with the same amplitude (e.g., $\Vert\omega_1\Vert=\Vert\omega_2\Vert$) will not be distinguished by a spectral clustering algorithm based solely on $\Vert\omega\Vert$. This typically leads to filters which are isotropic and not selective to a specific direction, which also holds for spatial methods \cite{bronstein2017geometric}. A second issue is that  the graph convolution employs filters which are built from local operators such as a Laplacian matrix: this typically leads to a smoothing operator \cite{kampffmeyer2019rethinking,li2018deeper,nt2020frequency,wu2019simplifying}. Thus, in those settings, spectral GCNN lose the ability to discriminate high-frequency attributes of a signal, which are also usually unstable and thus difficult to capture \cite{mallat1999wavelet}. In our work, we address those two issues by learning a complex-valued isometry in the spectral domain, which has, for instance, the ability to recover the spectral topology of 2D frequencies, without incorporating any specific prior: the filters are anisotropic and smooth in frequencies.

With GCNN, many architecture choices and design remain unclear and are obtained from a trial and error engineering process, such as the choice of a pooling operator or the non-linearity. Contrary to this, we motivate each building-block of our architecture from the scope of obtaining smooth graph representation. Furthermore, if the operators of a deep network are trained end-to-end, they lack  interpretability (e.g., explicit layer objective) because an end-to-end optimization algorithm can specify freely the weights of the internal layers of a neural network
~\cite{Oyallon_2017_CVPR}. For the sake of analysis, we learn each layer successively via a greedy procedure \cite{belilovsky2018greedy,belilovsky2019decoupled}. 

Stability to deformations and perturbations of a graph is motivated and achieved in \citet{gama2019stability}. In our work, our representation will  clearly  not be stable to local changes in the graph metric due to deformations (see Section \ref{fourier}). Instead, we address the problem of learning invariant to permutations but discriminative features. This is achieved through a simple averaging (or smoothing) which is deduced from the graph Laplacian, and our linear operators are optimized to be discriminative and to lead to smooth features.

We denote our approach Interferometric Graph Transform (IGT). Our architecture which consists of a cascade of complex isometry, modulus non linearity and linear averaging. No supervision is needed, and our representation is guaranteed to achieve a global invariance over the permutations of the graph domain, if  the final task requires it. Unsupervised learning is of particular interest for large datasets whose labeling cost is high. We also require the adjacency matrix for learning each linear operator, because our method relies on the intrisic topology of the data, yet this could be estimated from the data themselves \cite{carey2017graph}.

The IGT is defined in Section \ref{def}. First, Section \ref{fourier} defines a generalization of the complex-valued Euclidean Fourier Transform. Then, we explain our choice of linear operator in Section \ref{isometry}, and the optimization process is described in Section \ref{proj}. Finally, Section \ref{exp} reports our accuracies at the level of the state of the art on vision, skeletons and community detection tasks, which indicates the genericity of our approach. The corresponding code can be found here: \href{https://github.com/edouardoyallon/interferometric-graph-transform}{https://github.com/edouardoyallon/interferometric-graph-transform}.

\textbf{Notations}: for some complex or real vectors $x=(x[i])_i, y=(y[i])_i$, we consider the Hermitian scalar product $\langle x,y\rangle=\sum_i x[i]\overline{y[i]}$ and we write  $\Vert x\Vert^2=\langle x,x\rangle$. Also, $\mathbf{j}^2=-1$. The operator norm of a complex or real operator is given by $\Vert W\Vert=\sup_{x}\frac{\Vert Wx\Vert}{\Vert x\Vert}$. We write $x>0$ iff $\forall i, x[i]\geq 0$ and $x\neq 0$. We also denote $\{A,B\}$ the concatenation of the operators $A,B$, and $A^*=\overline{A}^{\operatorname{T}}$, the transconjugate.
\section{Related works}
\label{related}
The Group Scattering Transform \cite{mallat2012group} is a non-linear operator which can be interpreted as a  complex neural network defined over a Euclidean space sampled from a regular grid. Similarly to our work, it corresponds to a cascade of unitary transform, complex modulus and linear averaging. Yet, the unitary operators are fixed as a dilated wavelets family and involve no learning procedures. Scattering Transforms are thus difficult to adapt to non-regular grids. To tackle this issue, \citet{gama2018diffusion,gama2019stability} introduces the Graph Scattering Networks. They consist in a cascade of real wavelet transform and absolute value non-linearity. The wavelet transform is typically defined via the eigenvectors of the graph Laplacian, and thus suffer from issues stated in the Introduction. Furthermore, an absolute value is used in order to introduce a demodulation, yet the filters are not designed to do so, contrary for instance to a Gabor transform \cite{oyallon2018compressing}.  Another comparable architecture is the Haar Scattering Network \cite{chen2014unsupervised}, which employs Haar wavelets. A Haar tree is defined by forming  pairs of nodes with similar statistics, yet this considerably reduces the class of graphs that can be represented. Another proposition was to implement unitary operator for reducing the variance \cite{mallat2013deep}, in the specific case for which the averaging is performed by block. Yet, integrating the Laplacian's knowledge in these two formulations remains unclear as well as the link with invariance. Contrary to wavelet transforms, our operators are not structured by dilated filters: we shall see that our filters are closer to a Windowed Fourier transform \cite{mallat1999wavelet}. Due to this reason, while our representation has a lot of similarity with a Scattering Transform, we decided to use a different name borrowed from \cite{mallat2010recursive}.

A large variety of pooling operator has been proposed: linear pooling \cite{bruna2013spectral,gao2019learning,gao2019graph,luzhnica2019clique}, attention based pooling \cite{lee2019self}, max-pooling \cite{hamilton2017inductive,defferrard2016convolutional}, soft-max pooling \cite{ying2018hierarchical} and more. The general principle which guides the design of those pooling operators is an intermediary step of dimensionality reduction for handling large graphs and speeding up computations. To our knowledge, this is the first work to introduce and motivate a $\ell^2$-pooling (here, a modulus non-linearity) for graphs, specifically designed for building a representation whose discriminability will be preserved after the composition with a smoothing operator.

A related line of work proposed to combine GCNNs and auto-encoders \cite{salha2019keep,kipf2016variational}. The main idea is to embed the graph representations into a lower dimensional space thanks to a reconstruction criterion. Yet, this formulation does not take in account the need of invariance for addressing  certain tasks. \citet{belilovsky2017learning} applied deep networks to learn models to infer unsupervised graph structures, this however requires restrictions on the underlying data distribution. On the other hand, \cite{wu2019simplifying} postulates that progressive linear low-pass filtering is a key ingredient responsible for the success of GCN. Yet, the obtained invariants are linear: in our work, we build non-linear invariants thanks to a modulus non-linearity.

The goal of our method is not to describe graphs, yet signals whose topology is given by a fixed graph. In other words,  the graph we use is not sample dependant. Thus, we do not compare to unsupervised lines of works such as \citet{ren2019heterogeneous,velivckovic2018deep}. Furthermore, note that one could consider the larger setting of graphons \cite{ruiz2020graphon}, that would allow more flexibility in term of graph lengths.
\section{Interferometric Graph Transform}
\label{main}
\subsection{Definition}\label{def}
Let $d \in\mathbb{N}$ be the dimension of interest. We now introduce the Interferometric  Transform, which is defined over signal $x\in \mathbb{R}^{2d+1}$, without loss in generality\footnote{Up to adding a 0 component, one can always assume it because the 0-th frequency has no meaningful pairing.}. It consists of a cascade of  linear isometries,  pointwise modulus non-linearities and linear averagings. This is similar to \citet{gama2018diffusion} yet the linear operator is not a family of dilated wavelets. Formally, for a sequence of complex linear operators $W_n$, we define recursively the real non-linear operator:
\begin{equation}
\begin{cases}
U_{n+1} x= |W_n U_nx|,\\
U_0x=x.
\end{cases}
\end{equation}
Typically, $U_nx$ is a concatenation of signals with same dimension as $x$ and the operator $W_n=\{W_n^k\}$ applies simultaneously the same  collection of filters $W_n^k$ to each element of $U_nx$. Given a linear averaging $A$, we then define the Interferometric  Transform\footnote{We recall that we did not employ wavelets, thus leading to a name different from  "Scattering Transform".} of order $N\in\mathbb{N}$ as:
\begin{equation}
S_Nx=\{AU_Nx,...,AU_0x\}\,.
\end{equation}
It is illustrated Fig. \ref{fig:igt}. We shall choose the $\{W_n\}_n$ approximatively unitary, meaning that there exists $0\leq\epsilon<1$, such that for any $x\in \mathbb{R}^{2d+1}$:
\begin{equation}
(1-\epsilon)\Vert x\Vert^2\leq\Vert Wx\Vert^2+\Vert Ax\Vert^2\leq \Vert x\Vert^2\,.\label{isometry2}
\end{equation}

\begin{figure}
    \centering
    \vspace{-8pt}
    \includegraphics[width=0.7\linewidth]{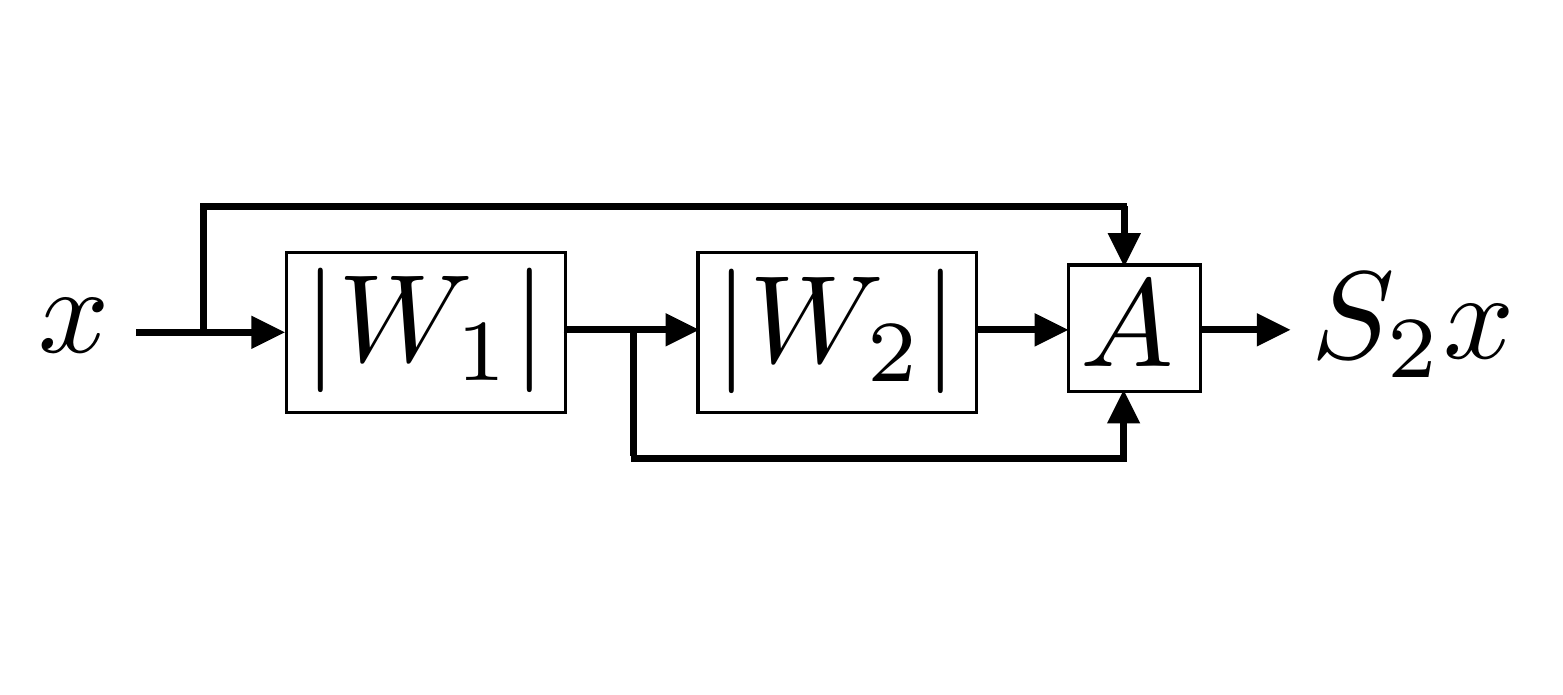}
    \vspace{-30pt}
    \caption{An illustration of the IGT with $N=2$.}
    \label{fig:igt}
    \vspace{-10pt}
\end{figure}

The operator $A$ is chosen non negative and of norm 1. The following lemma will be helpful to show that $S_N$ does preserve the energy of a signal:
\begin{lemma}\label{mean}
If $\forall x>0, Ax>0$ and $\Vert A\Vert\leq 1$, then $\exists 0< C\leq 1,  \forall x>0,$
\[C\Vert x\Vert\leq\Vert Ax\Vert\leq\Vert x\Vert\,.\]
\end{lemma}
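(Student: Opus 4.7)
The plan is to prove the upper bound trivially from $\|A\| \le 1$, and to obtain the lower bound via a compactness argument on the intersection of the nonnegative cone with the unit sphere, using the hypothesis $Ax > 0$ whenever $x > 0$ only to rule out vanishing of the continuous map $x \mapsto \|Ax\|$ on that compact set.

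More concretely, first I would observe that for any $x$, $\|Ax\| \le \|A\|\,\|x\| \le \|x\|$, which gives the right-hand inequality and also shows the constant $C$ we construct must satisfy $C \le 1$. For the nontrivial lower bound, I would introduce the set
\[
K \;=\; \{x \in \mathbb{R}^{2d+1} : x[i]\ge 0 \text{ for all } i,\ \|x\|=1\},
\]
which is closed and bounded in a finite-dimensional space, hence compact. The function $f:K\to\mathbb{R}$ defined by $f(x)=\|Ax\|$ is continuous (composition of a linear map with a norm), so by the extreme value theorem $f$ attains its infimum: there exists $x_0\in K$ with $f(x_0)=\min_{x\in K}\|Ax\|=:C$.

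Next I would argue $C>0$. Since $x_0\in K$, in the notation of the paper $x_0>0$; by the hypothesis on $A$, this forces $Ax_0>0$, and in particular $Ax_0\ne 0$, so $C=\|Ax_0\|>0$. Finally, to extend from the sphere to the whole positive cone, let $x>0$ be arbitrary and set $y=x/\|x\|\in K$; then $\|Ay\|\ge C$, and multiplying through by $\|x\|$ gives $\|Ax\|\ge C\|x\|$. Combined with $C\le 1$ from the operator-norm bound, this yields the announced inequality.

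I do not expect a serious obstacle here: the only subtle point is recognising that the positive-cone hypothesis is used solely to guarantee $Ax_0 \neq 0$ at the minimiser, and that this minimiser exists because we are working in a finite-dimensional ambient space $\mathbb{R}^{2d+1}$ where $K$ is compact. Everything else is elementary.
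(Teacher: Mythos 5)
Your proof is correct and follows essentially the same route as the paper: the upper bound from $\Vert A\Vert\leq 1$, and the lower bound by minimising the continuous map $x\mapsto\Vert Ax\Vert$ over the compact set $\{x>0\}\cap\{\Vert x\Vert=1\}$, with the positivity hypothesis on $A$ ensuring the minimum is nonzero. Your write-up is in fact slightly more careful than the paper's, since you make explicit both the homogeneity step extending from the sphere to the whole cone and the observation that $x>0$ (in the paper's convention, nonnegative and nonzero) is exactly what holds at the minimiser.
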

\begin{proof}
Since $\Vert Ax\Vert\leq \Vert x\Vert$, then if such $C$ exists, it satisfies $C\leq 1$. Furthermore, $\{x>0\}\cap \{\Vert x\Vert=1\}$ is compact, thus, $x\rightarrow \Vert Ax\Vert$ is minored by $C$ and reached: there exists $C, x_0>0, \Vert x_0\Vert=1$ s.t. $\Vert Ax_0\Vert=C>0$ by assumption on $A$.
\end{proof}
In this case, Proposition \ref{norm} shows that $S$ is approximatively unitary and is non-expansive.
\begin{proposition}\label{norm}For $N\in\mathbb{N}$, $S_N$ is non-expansive, ie, $\forall x,y$:
\begin{equation}
    \Vert S_Nx-S_Ny\Vert\leq \Vert x-y\Vert\,, 
    \label{exp}
\end{equation} 
and also:
\begin{equation}
   \Vert  S_Nx\Vert\leq \Vert x\Vert\,. \label{iso}
\end{equation}
Furthermore, if $\epsilon=0$, then:
\begin{equation}\label{isosup}
   \lim_{N\rightarrow \infty} \Vert  S_Nx\Vert= \Vert x\Vert\,. 
\end{equation}
\end{proposition}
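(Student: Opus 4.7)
The plan rests on two elementary scalar facts applied componentwise: the modulus is norm-preserving, $\Vert |a|\Vert = \Vert a\Vert$, and non-expansive, $\Vert |a|-|b|\Vert \leq \Vert a-b\Vert$. From \eqref{isometry2} I also read off $\Vert W_n\Vert \leq 1$ and $\Vert A\Vert \leq 1$, and the linearity of $W_n$ and $A$ lets me apply \eqref{isometry2} to any difference $U_n x - U_n y$.

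For \eqref{exp}, I would first establish the per-layer inequality
\[\Vert AU_nx-AU_ny\Vert^2 + \Vert U_{n+1}x - U_{n+1}y\Vert^2 \leq \Vert U_nx-U_ny\Vert^2,\]
by combining non-expansiveness of $|\cdot|$, which gives $\Vert U_{n+1}x - U_{n+1}y\Vert \leq \Vert W_n(U_nx - U_ny)\Vert$, with \eqref{isometry2} applied to $U_nx - U_ny$. Summing from $n=0$ to $N$ telescopes to $\Vert S_Nx - S_Ny\Vert^2 \leq \Vert x-y\Vert^2 - \Vert U_{N+1}x - U_{N+1}y\Vert^2$, which yields \eqref{exp}. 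Taking $y=0$ in the same chain (so $U_n y = 0$ for all $n$ and $\Vert U_{n+1}x\Vert = \Vert W_n U_n x\Vert$ by norm preservation of the modulus) gives $\Vert S_Nx\Vert^2 \leq \Vert x\Vert^2 - \Vert U_{N+1}x\Vert^2$, proving \eqref{iso}.

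For \eqref{isosup}, the assumption $\epsilon = 0$ promotes \eqref{isometry2} to an equality, so the telescoping above becomes the exact identity $\Vert S_Nx\Vert^2 = \Vert x\Vert^2 - \Vert U_{N+1}x\Vert^2$, and it suffices to show $\Vert U_{N+1}x\Vert \to 0$. Here I invoke Lemma \ref{mean}: for every $n\geq 1$ the iterate $U_n x = |W_{n-1}U_{n-1}x|$ is componentwise non-negative, so either it vanishes (in which case all subsequent iterates vanish and the conclusion is immediate), or $U_n x > 0$ in the paper's sense, and then $\Vert AU_n x\Vert \geq C\Vert U_n x\Vert$ for the fixed constant $C>0$ supplied by the lemma. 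Substituting this into the equality $\Vert U_{n+1}x\Vert^2 = \Vert U_n x\Vert^2 - \Vert AU_n x\Vert^2$ yields $\Vert U_{n+1}x\Vert^2 \leq (1-C^2)\Vert U_n x\Vert^2$, so $\Vert U_n x\Vert$ decays at a uniform geometric rate from $n=1$ onward and the limit follows.

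The main obstacle is this last step: Lemma \ref{mean} only supplies a lower bound on the positivity cone, which the iterates enter only after the first modulus, so the $n=0$ layer must be handled separately, and one must check that the degenerate case where some iterate is exactly zero does not break the geometric-decay argument. Everything else reduces to clean bookkeeping on telescoping sums once the componentwise modulus identities are in place.
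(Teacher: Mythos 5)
Your proof is correct and follows essentially the same route as the paper: telescoping the per-layer energy (in)equality derived from \eqref{isometry2}, deducing \eqref{exp} and \eqref{iso} from non-expansiveness of the modulus, and invoking Lemma \ref{mean} to get geometric decay of $\Vert U_Nx\Vert$ for \eqref{isosup}. You are in fact slightly more careful than the paper, which writes $\Vert U_Nx\Vert^2\leq(1-C^2)^N\Vert x\Vert^2$ as if Lemma \ref{mean} applied already at $n=0$ even though $U_0x=x$ need not be componentwise non-negative; starting the decay at $n=1$ and separately handling a vanishing iterate, as you do, is the right repair.
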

\begin{proof}
For Eq. \eqref{exp}, observe that one has a cascade of non-expansive operator. For Eq. \eqref{iso}, observe that $S_N0=0$. For the other side of the inequality, observe that Equation \eqref{isometry2} leads to:
\begin{equation}
    \Vert U_nx \Vert^2=  \Vert AU_nx\Vert^2+\Vert U_{n+1}x\Vert^2\label{eq1}
\end{equation}
Thus, a simple sum leads to:

\[\sum_{n=0}^N\Vert AU_nx\Vert^2+\Vert U_{N+1}x\Vert^2=\Vert x\Vert^2\,,\]
and from Lemma \ref{mean}, $\Vert U_Nx\Vert^2 \leq (1-C^2)^N\Vert x\Vert^2\rightarrow 0$ allows to conclude.

\end{proof}

We will now discuss the specific setting of Interferometric Transforms defined over graphs $\mathcal{G}$.
\subsection{Recovering a Fourier basis}\label{fourier}
The goal of this section is to introduce our complex operator which is equivalent to a Fourier Transform designed specifically for a graph. Here, we will consider signals whose coordinate's topology is organized by a graph $\mathcal{G}$, with $2d+1$ nodes and we name its corresponding Laplacian operator $\mathcal{L}$. Without loss of generality, we consider graphs with a single component, the extension to more components being natural by considering each sub-component individually. We write the orthogonal diagonalization basis of $\mathcal{L}$: $\{e_1,...,e_{2d+1}\}$, such that $\mathcal{L}e_{2d+1}=0$. In our applications, $e_{2d+1}=(1,...,1)$ and a typical averaging $A$  that we will use, corresponds to:
\[Ax=\langle x,e_{2d+1}\rangle\,.\]
This is a consistant choice with Lemma \ref{mean}. Note that we need to make an arbitrary choice of  basis at the moment that one eigenvalue is of multiplicity larger than 1.  As discussed in the introduction, the indexes $\{1,...,2d+1\}$ of the basis lacks structure, meaning that the order of the eigenvectors does not reflect the actual geometry of $\mathcal{L}$. However, standard approaches \cite{hammond2011wavelets} sort eigenvectors according to the amplitude of their eigenvalues and they employ this topology: here, we propose a rather different approach which behaves well in constant curvature settings. Our objective will be to pair basis' atoms according to the smoothness of their envelope. This will be analogous to form Hilbert pairs \cite{krajsek2007unified}, which corresponds to a pairing of the elements of the basis $\{e_1,...,e_{2d}\}$ in order to design an analytic representation \cite{johansson1999hilbert}.  Well localized representations using Hilbert pairs have typically a smooth modulus \cite{oyallon2018compressing}, and this analogy is a motivation for introducing this notion. To do so, for a permutation $\pi$ of $\{1,...,2d\}$, we introduce the pairing cost:
\begin{equation}
\begin{split}
C(\pi)&=\sum_{i=1}^d \sum_{k=1}^{2d+1} \sqrt{e_{\pi[2i]}[k]^2+e_{\pi[2i-1]}[k]^2}\\
&=\sum_{i=1}^d \Vert e_{\pi[2i]}+\mathbf{j}e_{\pi[2i-1]}\Vert_1\,.
\end{split}
\end{equation}

Observe that a simple application of Cauchy-Schwartz inequality leads to $C(\pi) \leq d\sqrt{2d+1}$. We propose to find the permutation $\pi^*$ such that:
\[C(\pi^*)=\max_{\pi}C(\pi)\,.\]Again, this problem aims at finding permutations such that pairs of eigen vectors have a complex envelope which maximizes the energy along the span of $A$. Observe that this loss can be written as a separable sum of 2-entries losses. In this case, an exact solution can be obtained via a Blossom algorithm \cite{edmonds_1965} which runs in polynomial time. Note that this algorithm  combined with the eigen-decomposition procedure needs to be computed once, and it leads to a computational complexity of $\mathcal{O}(d
^3)$ in the worst case scenario. We then consider the matrix $\mathcal{F}=\{\mathcal{F}_i\}_{i\leq 2d+1}$ whose columns are defined by $\forall 1\leq i\leq d,$ 
\begin{equation}
    \begin{cases} \mathcal{F}_i&= e_{\pi^*[2i]}+\mathbf{j}e_{\pi^*[2i-1]}\,, \\\mathcal{F}_{2d+1-i}&=e_{\pi^*[2i]}-\mathbf{j}e_{\pi^ *[2i-1]}\,,\\
    \mathcal{F}_{2d+1}&=e_{2d+1}\,.
    \end{cases}
\end{equation}

Observe that if $i\leq d$, then  $\overline{\mathcal{F}_i}=\mathcal{F}_{2d+1-i}\in\mathbb{C}^{2d+1}$. We can then state the following proposition:
\begin{proposition}
The matrix $\mathcal{F}$ is unitary on $\mathbb{C}^{2d+1}$.
\end{proposition}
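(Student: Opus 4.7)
The plan is to prove $\mathcal{F}^*\mathcal{F}=I$ by checking that the columns of $\mathcal{F}$ form an orthonormal family in $\mathbb{C}^{2d+1}$ under the Hermitian product $\langle x,y\rangle=\sum_k x[k]\overline{y[k]}$. The key structural input is that $\pi^*$ is a permutation of $\{1,\ldots,2d\}$, so the pairs $\{\pi^*[2i-1],\pi^*[2i]\}_{i=1}^d$ partition $\{1,\ldots,2d\}$ into $d$ disjoint two-element blocks, while the index $2d+1$ remains untouched and the base family $\{e_1,\ldots,e_{2d+1}\}$ is orthonormal by construction.

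I would split the verification of $\langle \mathcal{F}_p,\mathcal{F}_q\rangle$ into three cases. In case (a), $p$ and $q$ come from different pairs (including the sub-case $q=2d+1$): all four underlying eigenvector indices are distinct, so sesquilinear expansion of $\langle e_a\pm\mathbf{j}e_b, e_c\pm\mathbf{j}e_d\rangle$ produces only cross terms of the form $\langle e_\alpha,e_\beta\rangle$ with $\alpha\neq\beta$, which vanish by orthonormality. In case (b), the conjugate pair $\mathcal{F}_i,\mathcal{F}_{2d+1-i}$: since the $e_k$ are real, $\overline{e_a-\mathbf{j}e_b}=e_a+\mathbf{j}e_b$, hence $\langle e_a+\mathbf{j}e_b,e_a-\mathbf{j}e_b\rangle=\|e_a\|^2-\|e_b\|^2+2\mathbf{j}\langle e_a,e_b\rangle=0$ by orthonormality. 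In case (c), the diagonal terms: the analogous expansion yields $\|\mathcal{F}_i\|^2=\|e_a\|^2+\|e_b\|^2$ for paired columns and $\|\mathcal{F}_{2d+1}\|^2=\|e_{2d+1}\|^2=1$.

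The one mildly subtle point is that case (c) gives $\|\mathcal{F}_i\|^2=2$ for $1\leq i\leq 2d$, so strictly speaking $\mathcal{F}^*\mathcal{F}=I$ requires the implicit $1/\sqrt{2}$ normalization of the paired columns that is standard in the construction of Hilbert-pair analytic frames; after this rescaling the three cases combine to give an orthonormal basis of $\mathbb{C}^{2d+1}$, hence unitarity. Apart from this normalization bookkeeping, I do not expect any obstacle: the argument relies on nothing beyond sesquilinearity, real-valuedness of the eigenvectors, orthonormality of $\{e_k\}$, and the disjointness inherited from $\pi^*$ being a permutation.
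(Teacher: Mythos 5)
Your proof is correct and follows essentially the same route as the paper's: a direct sesquilinear expansion showing the columns of $\mathcal{F}$ are pairwise orthogonal, using disjointness of the pairs $\{\pi^*[2i-1],\pi^*[2i]\}$ and orthonormality of the $e_k$. Your case (c) is in fact more careful than the paper, whose proof checks only off-diagonal orthogonality and never computes $\Vert\mathcal{F}_i\Vert$; as you note, the paired columns as written have squared norm $\Vert e_{\pi^*[2i]}\Vert^2+\Vert e_{\pi^*[2i-1]}\Vert^2=2$, so $\mathcal{F}^*\mathcal{F}=\mathrm{diag}(2,\dots,2,1)$ and the statement is literally true only after the $1/\sqrt{2}$ rescaling of the first $2d$ columns that you supply. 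This is a real (if harmless) normalization gap in the paper, not in your argument.
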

\begin{proof}
For simplicity, assume $\pi^*[i]=i$. Then, let $i$ s.t. $i\leq d$. Assume first that $j\leq d, j\neq i$, then $e_{2i}\perp e_{2j},e_{2i}\perp e_{2j-1}, e_{2i-1}\perp e_{2j}, e_{2i-1}\perp e_{2j-1}$, thus $\mathcal{F}_i \perp \mathcal{F}_j$ and also $\mathcal{F}_i \perp \overline{\mathcal{F}_j}$. Finally if $j=2d+1-i$,
\begin{align*}
    \langle \mathcal{F}_i, \overline{\mathcal{F}_i}\rangle&=\langle e_{2i}+\mathbf{j}e_{2i-1},e_{2i}-\mathbf{j}e_{2i-1}\rangle=\Vert e_{2i}\Vert^2-\Vert e_{2i-1}\Vert^2\\
&=0
\end{align*}\end{proof}
For illustration purpose, consider the graph $\mathcal{G}$ of a grid of length $2d+1$ with periodic boundary condition, an eigenbasis of its discrete Laplacian is clasically given, for $k\leq d, m\leq 2d+1$, by:
\begin{align}
&e_{2k-1}[m]= \sqrt{\frac{1}{2d+1}}\cos(\frac{\pi}{2d+1}(m-\frac{1}{2})2k)\,,\\
&e_{2k}[m]= \sqrt{\frac{1}{2d+1}}\sin(\frac{\pi}{2d+1}(m-\frac{1}{2})2k)\,,\\
&e_{2d+1}[m]=\frac{1}{\sqrt{2d+1}}\,.
\end{align}
In this case, we have the following lemma to derive the optimal pairing $\pi^*$:
\begin{lemma}
An optimal permutation $\pi^*$ is given by $\pi^*[n]=n$.
\end{lemma}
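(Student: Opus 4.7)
The plan is to show that the identity permutation already saturates the Cauchy--Schwarz upper bound $C(\pi)\leq d\sqrt{2d+1}$ recorded just before the lemma. The decisive algebraic fact is that, for each $k$, the two eigenvectors $e_{2k-1}$ and $e_{2k}$ are the cosine and sine of the \emph{same} angular frequency $\frac{2\pi k}{2d+1}\bigl(m-\tfrac12\bigr)$. Substituting the explicit formulas and applying $\cos^2\theta+\sin^2\theta=1$ node by node yields
\[
\sqrt{e_{2k-1}[m]^2+e_{2k}[m]^2}=\sqrt{\tfrac{1}{2d+1}},
\]
a constant independent of both $m$ and $k$. This is essentially the only computation required.

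Second, I sum this constant envelope over the $2d+1$ nodes to obtain $\sqrt{2d+1}$ per pair, and then over the $d$ index pairs to conclude $C(\pi^*)=d\sqrt{2d+1}$ for $\pi^*[n]=n$. Combined with the upper bound $C(\pi)\leq d\sqrt{2d+1}$ valid for every permutation $\pi$, this shows the identity pairing is a maximizer, which is exactly the claim of the lemma.

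Conceptually, what makes this work is that equality in the underlying Cauchy--Schwarz inequality requires the envelope $\sqrt{e_{\pi[2i]}[m]^2+e_{\pi[2i-1]}[m]^2}$ to be independent of $m$, and pairing the cosine and sine of a common frequency is the canonical way to realize this on a periodic one-dimensional grid. Consequently, no case analysis on competing permutations $\pi\neq\pi^*$ is needed: the upper bound already subsumes them. The only (mild) obstacle is normalization bookkeeping, i.e.\ keeping track of the $\sqrt{1/(2d+1)}$ factor so that the saturation of the bound is verified exactly rather than up to a constant.
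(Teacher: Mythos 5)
Your proof is correct and follows essentially the same route as the paper's: both rest on the fact that the Cauchy--Schwarz bound on each pair's $\ell^1$ cost is attained exactly when the complex envelope $\sqrt{e_{\pi[2i]}[m]^2+e_{\pi[2i-1]}[m]^2}$ is constant in $m$, which the cosine/sine pair of a common frequency achieves via $\cos^2\theta+\sin^2\theta=1$. Your version is more explicit than the paper's one-line argument (which phrases the same idea as ``$A|x|$ is maximal under a norm constraint iff $|x|$ is constant, true for complex exponentials''), and your normalization bookkeeping is consistent with the eigenvectors as written in the paper.
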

\begin{proof}
Introducing $Ax=\frac{1}{\sqrt{2d+1}}\sum_m x[m]$, from Cauchy Schwartz, under the constraint $\Vert x\Vert=1$, $Ax$ is maximal iff $x[m]=1,\forall m$. This is in particular true for $|x|[m]\triangleq |x[m]|$ if $x[m]=e^{\mathbf{j}\omega m}$ for some $\omega\in\mathbb{R}$, which is achieved by the pairing proposed in this Lemma.
\end{proof}

In this case, $\mathcal{F}_i[m]=\sqrt{\frac{1}{2d+1}}e^{\mathbf{j}\frac{\pi}{2d+1}(m-\frac{1}{2})2i}, i\leq d$. This thus justifies the terminology  Fourier Transform for $\mathcal{F}$ (up to a phase multiplication) as one can recover the Discrete Fourier Transform: our method has a natural interpretation in the Euclidean case. Pairing those eigen-vectors allows to introduce an asymetry between the real and imaginary part of our spectral operator, which will be useful and necessary for learning a complex unitary operator, that we discuss in the next section.

\subsection{Specifying the isometry layer per layer}\label{isometry}
\subsubsection{An energy preserving procedure}
We now describe our objective for specifying each operator $W_n$ at order $n$. The graph filtering operators $W_n$ that we consider consists of $K$ filters,  meaning that the eigenvalues $\{\hat W_n^k\}\subset \mathbb{C}^{2d+1}, k\leq K$ of each $W_n^k$ can be derived from $\mathcal{F}$ via:
\[\mathcal{F}^*W_n^k\mathcal{F}=\text{diag}(\hat W_n^k),\forall k\leq K\,.\]

With a slight abuse of notations when non-ambiguous, we might write $\text{diag}(\hat W_n^k)$ as $\hat W_n^k$. Let us write $x_1,...,x_p,...\in\mathbb{R}^{Q\times (2d+1)}$ our data points, where $Q\in\mathbb{N}$ is the number of input channels. For a signal $z$, consider the loss: \[\ell(W,z)=\Vert (\mathbf{I}-A)z\Vert^2-\Vert A|Wz|\Vert^2\,.\] 
If $W$ is  an isometry, this quantifies the energy preserved after an averaging $A$. We will only consider operators which are 1-Lipschitz and diagonalized by $\mathcal{F}$,  thus we consequently introduce:
\begin{equation}
\mathcal{C}=\{W; \Vert \{W,A\}\Vert\leq 1,\mathcal{F}^*W^k\mathcal{F}=\text{diag}(\hat W^k),\forall k\}\,.
\end{equation}
Our operator $W_n$ will be specified by the following minimization of the  empirical risk $L$:
\begin{equation}
W_n\triangleq\arg\min_{W\in\mathcal{C}} \sum_p \ell(W,U_{n}x_p)\label{loss} = \arg\min_{W\in\mathcal{C}} L(W)\,.
\end{equation}

Appendix \ref{app} proves that $L$ is concave in $W$, and is positive if  $\Vert \{W,A\}\Vert\leq 1$. It is very similar to a Procrustes problem~\cite{schonemann1966generalized}. Concave minimization has been well studied, and a global solution of a concave minimization over a convex set lays in the extremal points of this convex set \cite{horst1984global,rockafellar1970convex,doi:10.1137/1028106}. The next proposition characterizes the extremal point of $\mathcal{C}$, whose proof is defered to the Appendix.

\begin{proposition}
Let $\mathcal{S}$ the extremal points of $\mathcal{C}$, then $\mathcal{S}\subset  \{W,\Vert Wx\Vert =\Vert (\mathbf{I}-A)x\Vert,\forall x\in \mathbb{R}^{2d+1}\}$.
\end{proposition}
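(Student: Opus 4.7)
The plan is to diagonalize every operator simultaneously in the Fourier basis $\mathcal{F}$ constructed in Section \ref{fourier}, translate the norm constraint defining $\mathcal{C}$ into a per-frequency constraint on the eigenvalues, and then invoke the classical fact that extremal points of a product of convex sets are products of extremal points of the factors.

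First I would introduce eigenvalue coordinates $\hat W^k[i]\in\mathbb{C}$ for $1\le k\le K$ and $1\le i\le 2d+1$. Since $A$ is the rank-one spectral projector on $e_{2d+1}=\mathcal{F}_{2d+1}$, writing $\hat x=\mathcal{F}^*x$ and using Parseval gives
\[\Vert\{W,A\}x\Vert^2=\sum_{k}\Vert W^kx\Vert^2+\Vert Ax\Vert^2=\sum_{i=1}^{2d+1}\Big(\sum_{k=1}^K|\hat W^k[i]|^2\Big)|\hat x[i]|^2+|\hat x[2d+1]|^2.\]
Maximizing over unit vectors $\hat x$ decouples across coordinates, so $\Vert\{W,A\}\Vert\le 1$ holds iff $\sum_k|\hat W^k[i]|^2\le 1$ for every $i<2d+1$ and $\hat W^k[2d+1]=0$ for every $k$. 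In eigenvalue coordinates $\mathcal{C}$ is therefore the Cartesian product $B_1\times\cdots\times B_{2d}\times\{0\}$, where each $B_i$ is the closed Euclidean unit ball of $\mathbb{C}^K$.

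Next, using $\mathrm{ext}(\prod_i C_i)=\prod_i\mathrm{ext}(C_i)$ together with the strict convexity of each $B_i$, any $W\in\mathcal{S}$ must satisfy $\sum_k|\hat W^k[i]|^2=1$ for $1\le i\le 2d$ and $\hat W^k[2d+1]=0$. A final application of Parseval then yields
\[\Vert Wx\Vert^2=\sum_{i=1}^{2d+1}\Big(\sum_k|\hat W^k[i]|^2\Big)|\hat x[i]|^2=\sum_{i=1}^{2d}|\hat x[i]|^2=\Vert x\Vert^2-|\hat x[2d+1]|^2=\Vert(\mathbf{I}-A)x\Vert^2,\]
which is precisely the claimed inclusion.

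The only delicate step is confirming that $\{W,A\}$ is simultaneously diagonal in $\mathcal{F}$, so that its operator norm really does decouple frequency by frequency; this is exactly where the specific choice of $A$ as the projector on the kernel of $\mathcal{L}$ is essential. Everything else reduces to elementary convex-geometry bookkeeping.
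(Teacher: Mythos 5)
Your overall strategy is the same as the paper's: conjugate $\mathcal{C}$ by $\mathcal{F}$, recognize it in eigenvalue coordinates as a Cartesian product of balls, use the fact that extremal points of a product of strictly convex balls are products of spheres, and convert the saturated constraint back into the isometry identity via Parseval. One minor deviation: you specialize $A$ to the rank-one spectral projector onto $e_{2d+1}$, whereas the paper's proof keeps general spectral coefficients $\hat A[i]$ in the constraint (consistent with the quantities $\Lambda_i$ of the projection step, and needed for the vision experiments where $A$ is a Gaussian low-pass rather than the mean). This costs generality but not correctness in the community-detection setting.

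The substantive issue is your claim that the operator-norm constraint decouples per individual frequency, i.e.\ that $\Vert\{W,A\}\Vert\le1$ holds \emph{iff} $\sum_k|\hat W^k[i]|^2\le1$ for each $i$ separately. This is not correct on the domain $\mathbb{R}^{2d+1}$: since the inputs are real and $\mathcal{F}_{2d+1-i}=\overline{\mathcal{F}_i}$, the coefficients satisfy $\hat x[2d+1-i]=\overline{\hat x[i]}$, so $|\hat x[i]|$ and $|\hat x[2d+1-i]|$ cannot be varied independently and the supremum only decouples over conjugate pairs $\{i,2d+1-i\}$. The correct constraint is the per-pair one, $\sum_k\bigl(|\hat W^k[i]|^2+|\hat W^k[2d+1-i]|^2\bigr)+|\hat A[i]|^2+|\hat A[2d+1-i]|^2\le 2$ for $i\le d$, which is exactly how the paper's appendix proof and its projection operator (via $\Gamma_i\le\Lambda_i$) are set up. Your set is a strict subset of $\mathcal{C}$, so the extremal points you exhibit are not those of $\mathcal{C}$: for $K=1$, the pair-sphere contains points with $|\hat W[i]|^2=2$ and $|\hat W[2d+1-i]|^2=0$, which your per-coordinate balls exclude. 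The proposition's conclusion survives, because saturating the pair constraint still yields $\Vert Wx\Vert^2=\Vert x\Vert^2-\Vert Ax\Vert^2$ for real $x$, so the repair is only to replace your coordinate balls $B_i\subset\mathbb{C}^K$ by balls in $\mathbb{C}^{2K}$ indexed by conjugate pairs; but as written, the identification of $\mathcal{C}$ --- which is the crux of the argument --- is incorrect.
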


The Figure \ref{analytic} represents the spectrum of an operator $W$ learned from the small natural images of CIFAR-10. Remarkably, this operator is analytic, meaning that half of the frequency plane is set to 0 (up to a per-filter central symetry). This is natural because the analytic part of a filter is known to provide a smoother envelope, which is better captured by a low-pass filtering \cite{mallat1999wavelet}. In the settings of \citet{oyallon2018compressing}, this is quantified. Note also that the filters are localized and smooth in frequency, which indicates that the learned filters have efficiently used the topology of the frequency domain, without explicitely incorporating any specific \textit{a priori}.

\begin{figure}[ht]
\vskip 0.2in
\begin{center}
\centerline{\includegraphics[width=\columnwidth]{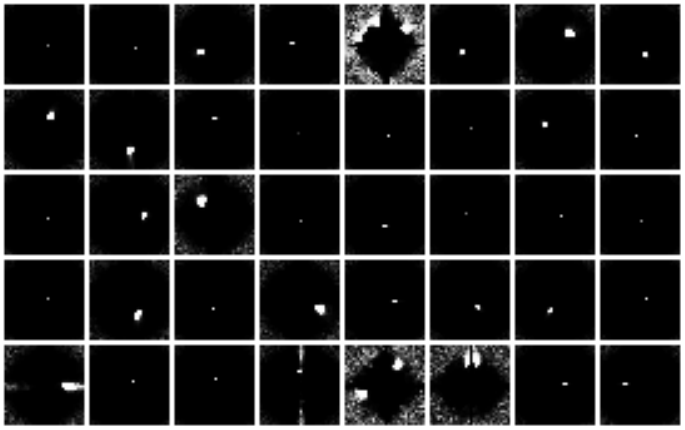}}
\caption{Spectral filters's modulus $|\hat W^k_1|$ of a first order learned operator. The 0 frequency corresponds to the center. An approximatively analytic transform is obtained by solving Eq. \eqref{loss} for small CIFAR-10 natural images. Observe that the filters are well localized with various frequency bandwidths, and most of them are analytic. Some filters have some noisy high frequencies, which are mainly due a poor conditioning: indeed signals' energy is concentrated in a disk around the central frequency.}
\label{analytic}
\end{center}
\vskip -0.2in
\end{figure}

For a general graph, the topology of the frequency index is in general unknown, meaning that designing an analytic operator is challenging. On the other hand, our criterion should enforce filters which  have a smooth modulus, i.e., which maximize the energy of the envelope $|Wx|$ along the span of $A$. In the next section, we discuss how to optimize Eq. \eqref{loss}.

\subsubsection{A projected gradient method}\label{proj}

We propose to minimize Eq. \eqref{loss} by a projected gradient descent as derived in \cite{hager2016projection}. Obtaining a global minimizer is a difficult task as explained in \citet{horst1984global}. In order to define our projection, we introduce a Littlewood-Paley identity \cite{stein1970topics} $\Gamma$, related to $W$ defined by:
\begin{equation*}
\begin{cases}
\Gamma_i = \sum_{k=1}^K |\hat W^k[2d+1-i]|^2+|\hat W^k[i]|^2\,, i\leq 2d\,,\\
\Gamma_{2d+1}=2\sum_{k=1}^K |\hat W^k[2d+1]|^2\,,
\end{cases}
\end{equation*}
as well as $\Lambda$ related to $\mathbf{I} - A$:

\begin{equation*}
\begin{cases}
\Lambda_i = 2-|\hat A[i]|^2-|\hat A[2d+1-i]|^2\,, i\leq 2d\,,\\
\Lambda_{2d+1}=2-2|\hat A[2d+1]|^2\,.
\end{cases}
\end{equation*}

We then define the diagonal matrix $\hat P$, whose diagonal is:
\begin{equation*}
\hat P_i = \begin{cases}
1 \text{, if } \Gamma_i\leq \Lambda_i\,,\\
\sqrt{\frac{\Lambda_i}{\Gamma_i}}\text{, otherwise.}
\end{cases}
\end{equation*}

Then the convex projection on $\mathcal{C}$ is given filter-wise by:

\[\text{proj}(W^k)=\mathcal{F}^*\hat W^k\hat P\mathcal{F}\,.\]

This leads to the following scheme, for a decreasing sequence of step size $\alpha_t$ and $L_t$ the loss at step $t$:
\[W^{t+1}=\text{proj}(W^t-\alpha_t \nabla_W L_t(W))\,,\]

with a random initialization $W_0$ such as a white noise. In our experiments, we typically use a stochastic gradient, thus the loss $L_t$ corresponds to the empirical loss over a randomly selected batch of samples. The next section provides numerical experiments which corroborate that we successfully optimized this operator to obtain a discriminative but smooth representation.

\section{Numerical experiments}\label{exp}
For each experiments, we combine an IGT representation with a linear SVM, as implemented by \citet{fan2008liblinear}. We select the order of the IGT such that more than 99\% of the energy is captured, leading at maximum to an order 2. We systematically report state-of-the-art performances in unsupervised spectral GCN settings. We note that in all our benchmarks, typical unsupervised representations are  shallow, similarly to our representation, which indicates that training deep unsupervised representations is a  challenging task. The regularization  of the SVM was cross-validated as $C=10^{-k},k=0,...,4$; no more than 3 runs have been done, without any intensive grid search. Also, as a sanity check, we experimented with  random $\mathcal{F}$ or $W_n$, leading systematically to substential drops in performances.
\subsection{Image classification}
In each vision experiments, we consider the Laplacian obtained from a regular grid and we follow \citet{henaff2015deep} combined with our Lemma \ref{mean}: we can explicitely consider the standard 2D Discrete Fourier Transform. Note that shuffling image's pixels (assuming that the Laplacian is shuffled consistantly) does not affect our algorithm: our method allows to recover the Euclidean grid structure without being explicitely incorporated, contrary to a 2D CNN.  We limited our experiments to a single layer, because it already captures most of the signals' energy. We compared our numerical performances against a Gabor Scattering Transform \cite{andreux2018kymatio} as well as a Haar Scattering Transform. In \citet{chen2014unsupervised}, two settings are considered: one for which the geometry is known (2D grid), and one for which it is not (no grid). Also in \citet{chen2014unsupervised}, an ensembling of models, combined with a supervised feature selection algorithm is used, as well as a Gaussian SVM. Instead, for the sake of comparison, we have re-run their code for a single model followed by a Linear SVM, which leads to a substantial drop in performances. In both settings, our operators have $K_1=40$ filters. The operator $W_1$ is learned via  SGD with batch size 64, for 5 epochs. We reduced an initial learning rate  of $1.0$ by $10$ at  iterations 500, 1000 and 1500. \citet{oyallon2015deep} and \citet{bruna2013invariant} found that averaging Scattering representations with a low-pass filter over a windows of length $2^J=2^3$ was optimal, thus we did not change this hyper parameter: if $\phi_J$ is a Gaussian filter of length $2^J$, we consider in those experiments:
\begin{equation}
Ax(u)=x\star\phi_J(u/2^J)\,.
\end{equation}
\subsubsection{CIFAR-10}
CIFAR-10 is a challenging dataset of small $32\times32$ colored images, which consists of $5\times10^4$ images for training and $10^4$ for testing. Table \ref{cifar10} reports our performances with a linear classifier.   Observe that our method improves by about 10\% the classification from the raw data. We also compare our work with supervised spectral methods, and we achieve similar performances without supervision. Despite incorporating more pointwise non-linearity, a Haar Transform performs substantially worse, which indicates that Haar features are not discriminative enough. Our spectral method leads to state-of-the-art performances, competitive with supervised methods.  By incorporating the Euclidean domain knowledge, a Gabor Scattering outperforms by $10\%$ the IGT, and adding some additional supervision and more non-linearity leads to the state of the art on CIFAR10 \cite{zagoruyko2016wide}.

\begin{table}[t]
\caption{Classification accuracies on CIFAR-10. \textit{Sup.} and \textit{Acc.} stand respectively for Supervision and Accuracy.}\label{cifar10}
\label{sample-table}
\vskip 0.15in
\begin{center}
\begin{small}
\begin{sc}
\begin{tabular}{lcccc}
\toprule
Method & Depth&Sup.& Acc. \\
\midrule
Raw data &-&$\times$&39.7\\
\hline
\hline
Spectral GCN&&&\\
\hline
\hline
IGT (ours) & 1&$\times$&\textbf{52.4}\\
Haar Scattering (no grid) & 2 &$\times$& 46.3\\
\cite{knyazev2019image}&3  &$\surd$& 50.6\\
\cite{7979525} &1&$\surd$&$\sim 52$\\
\hline
\hline
2D CNN\\
\hline
\hline
Gabor Scattering  &1  &$\times$& 64.9 \\
Haar Scattering (2D grid) & 4 &$\times$& 43.4\\
\cite{zagoruyko2016wide}  &40&$\surd$&\textbf{94.1}\\
\bottomrule
\end{tabular}
\end{sc}
\end{small}
\end{center}
\vskip -0.1in
\end{table}

\subsubsection{MNIST}
MNIST is a simple dataset of small $28\times28$  images, which consists of $6\times10^4$ images for training and $10^4$ for testing. Table \ref{mnist} reports the accuracy of our method. Again, our method outperforms unsupervised spectral representations: for instance, IGT outperforms \cite{zou2019graph} which defines a Scattering Transform based on graph wavelets. Adding some supervisions, such as in \citet{defferrard2016convolutional}, allows to obtain competitive performances with spatial convolutional methods \cite{bruna2013invariant}.

\begin{table}[t]
\caption{Classification accuracies on MNIST.}\label{mnist}
\label{sample-table}
\vskip 0.15in
\begin{center}
\begin{small}
\begin{sc}
\begin{tabular}{lcccc}
\toprule
Method & Depth&Sup.& Acc. \\
\midrule
Raw data &-&$\times$&93.8\\
\hline
%\hdashline
\hline
GCN&&&\\
%\hdashline
\hline
\hline
IGT (ours) & 1&$\times$&96.1\\
\cite{zou2019graph}&2&$\times$&95.6\\
Haar Scattering (no grid) & 6 &$\times$& 82.3\\
\cite{defferrard2016convolutional}& 2&$\surd$&\textbf{99.1}\\
\hline
\hline
2D CNN\\
\hline
\hline
Haar Scattering (2D grid) & 4 &$\times$& 88.6\\
Gabor Scattering  &1  &$\times$& 98.6 \\
\cite{defferrard2016convolutional}&2  &$\surd$& \textbf{99.3} \\
\bottomrule
\end{tabular}
\end{sc}
\end{small}
\end{center}
\vskip -0.1in
\end{table}

\subsection{Action prediction}
We now consider several 3D skeletons datasets, whose objective is to predict an action from a sequence of frame.  For each dataset, a (handcrafted) skeleton represented as a graph is provided, based on human body connectivity, whose nodes are the coordinates of some human body parts (not images). Here, we preprocess our datasets using the representation proposed by \citet{mazari2019mlgcn}, which consists of a temporal barycenter of each node's coordinates taken along non-overlapping windows of equal time length. We note that our goal is not to propose a new better pre-processing method than the other works we compared to, yet to improve the initial features, thus we reported  the raw data accuracy.
\subsubsection{SBU}
Eeach  SBU sample describes a two person interaction, and SBU contains 230 sequences and 8 classes (6,614
frames). The corresponding graph has 30 nodes. The accuracy is reported as the mean of the accuracies of a 5-fold procedure. We used an order 2 IGT, with $K_1=K_2=30$ filters for each operator. We train our operators for 5 epochs, with a batch size of 64, an initial learning rate of 1.0, dropped by 10 at the iterations 10, 20 and 30. Table \ref{sbu} reports the accuracy of various supervised and unsupervised method on SBU. An IGT  improves by about 2\% a linear classifier on the raw data. Furthermore, our method achieves similar performances compared to supervised methods, while outperforming unsupervised representations.
\begin{table}[t]
\caption{Accuracies on SBU, via a standard 5-fold procedure.}\label{sbu}
\label{sample-table}
\vskip 0.15in
\begin{center}
\begin{small}
\begin{sc}
\begin{tabular}{lcccc}
\toprule
Method & Depth& Acc. \\
\midrule
Raw data  &-&92.7\\
\hline
\hline
Unsupervised\\
\hline
\hline
IGT (ours) & 1&91.3$\pm 1.0$\\
IGT (ours) & 2&\textbf{94.5$\pm 1.0$}\\
\cite{kacem2018novel} & - &93.7\\
\hline
\hline
Supervised\\
\hline
\hline
\cite{wu2019simplifying}\footnote{The result is obtaineds from a re-implementation of~\cite{mazari2019mlgcn}}&1&96.0\\
\cite{mazari2019mlgcn}& 1 & \textbf{98.6}\\
\bottomrule
\end{tabular}
\end{sc}
\end{small}
\end{center}
\vskip -0.1in
\end{table}

\subsubsection{NTU}
NTU is a challenging dataset for large scale human action analysis \cite{shahroudy2016ntu}, with 60 different classes and 56880 samples, corresponding to 40 subjects and 80 different views. The corresponding graph has 50 nodes. Two procedures allow to report the accuracy. In cross-subject evaluation,  40 subjects are split into training and testing groups, consisting of 20 subjects such that the training and testing sets have respectively 40,320 and 16,560 samples. For cross-view evaluation, the samples are split according to different cameras view, such that the training and testing sets have respectively 37,920 and 18,960 samples.

We use $K_1=10$ and $K_2=5$ filters respectively for  our two learned operators. We trained via SGD our representation, with a batch size of 64, an initial learning rate of $1.0$ being dropped by $10$ at iterations 100, 200 and 300. Table \ref{ntu} reports the accuracies for various unsupervised and supervised methods. First, observe that our method improves by about 30\% a linear SVM trained on the raw features. It also outperforms all the unsupervised methods, yet a sigificant gap of 30\% exists with supervised algorithm.

Here, we note that a global invariant to permutations was not required, thus we did not average our representation. In this case, a linear invariant is obtained from a linear SVM, which can freely adjust the degree of invariance to the supervised task which is considered, yet it leads to an extra-computational cost. Table \ref{ablation} corresponds to an ablation of our method and indicates that here, not averaging our representation improves accuracies. Note also that in this case, first and second orders perform similarly: it indicates that the second order doesn't recover more informative attributes yet this doesn't invalidate any claims done. It also leads to a substential increasing of dimension. However, with an averaging, the second order brings a significant improvement over the first order because it recovers more information. Without averaging our representation, the accuracies vary by less than 1\%.

\begin{table}[t]
\caption{Classification accuracies on NTU, with a final linear classifier.}\label{ntu}
\label{sample-table}
\vskip 0.15in
\begin{center}
\begin{small}
\begin{sc}
\begin{tabular}{lcccc}
\toprule
Method & Depth& View & Sub \\
\midrule
Raw data &-&22.9&31.9\\
\hline
\hline
Unsupervised\\
\hline
\hline
IGT (ours) & 1&54.6&\textbf{60.5}\\
IGT (ours) & 2&\textbf{55.6}&59.9\\
\cite{evangelidis2014skeletal} & 2&41.4&38.6\\
\cite{vemulapalli2014human} & -&52.8&50.1\\
\hline
\hline
Supervised\\
\hline
\hline
\cite{li2019symbiotic}  & $\geq 2$& 90.1&96.4\\
\bottomrule
\end{tabular}
\end{sc}
\end{small}
\end{center}
\vskip -0.1in
\end{table}

\begin{table}[t]
\caption{Ablation experiments on NTU, with IGT.}\label{ablation}
\label{sample-table}
\vskip 0.15in
\begin{center}
\begin{small}
\begin{sc}
\begin{tabular}{lcccc}
\toprule
Order&Averaging& View & Sub \\
\midrule
1&$\surd$&39.7&44.0\\
2&$\surd$&49.3&52.9\\
\midrule
1&$\times$&54.6&\textbf{60.5}\\
2&$\times$&\textbf{55.6}&59.9\\
\bottomrule
\end{tabular}
\end{sc}
\end{small}
\end{center}
\vskip -0.1in
\end{table}

\subsection{Community detection}
We reproduce the experiments of \citet{gama2019stability} using their provided source code, and we compare our representation with a Graph Scattering Transform (GST) using various mother wavelets. In all our experiments, we used a single order IGT, and our operator is learned with a SGD with constant step size of $10^{-3}$ and batch size of 64. We followed the same evaluation procedure as \citet{gama2019stability}. Our experiments suggest that selecting a linear operator according to a smoothness criterion can improve the numerical performances compared to a wavelet transform, which is stable to deformations.
\subsubsection{Authorship attribution}
This dataset consists of a graph with 188 nodes representing a bag-of-words for some collection of texts. The objective is to decide if a writer is the author of a given text. The benchmarking of this dataset consists of reporting the accuracy given a number of training sample. Observe on Figure \ref{author}, that  IGT systematically outperforms \citet{gama2019stability} for each training size.
\begin{figure}[ht]
\vskip 0.2in
\begin{center}
\centerline{\includegraphics[width=\columnwidth]{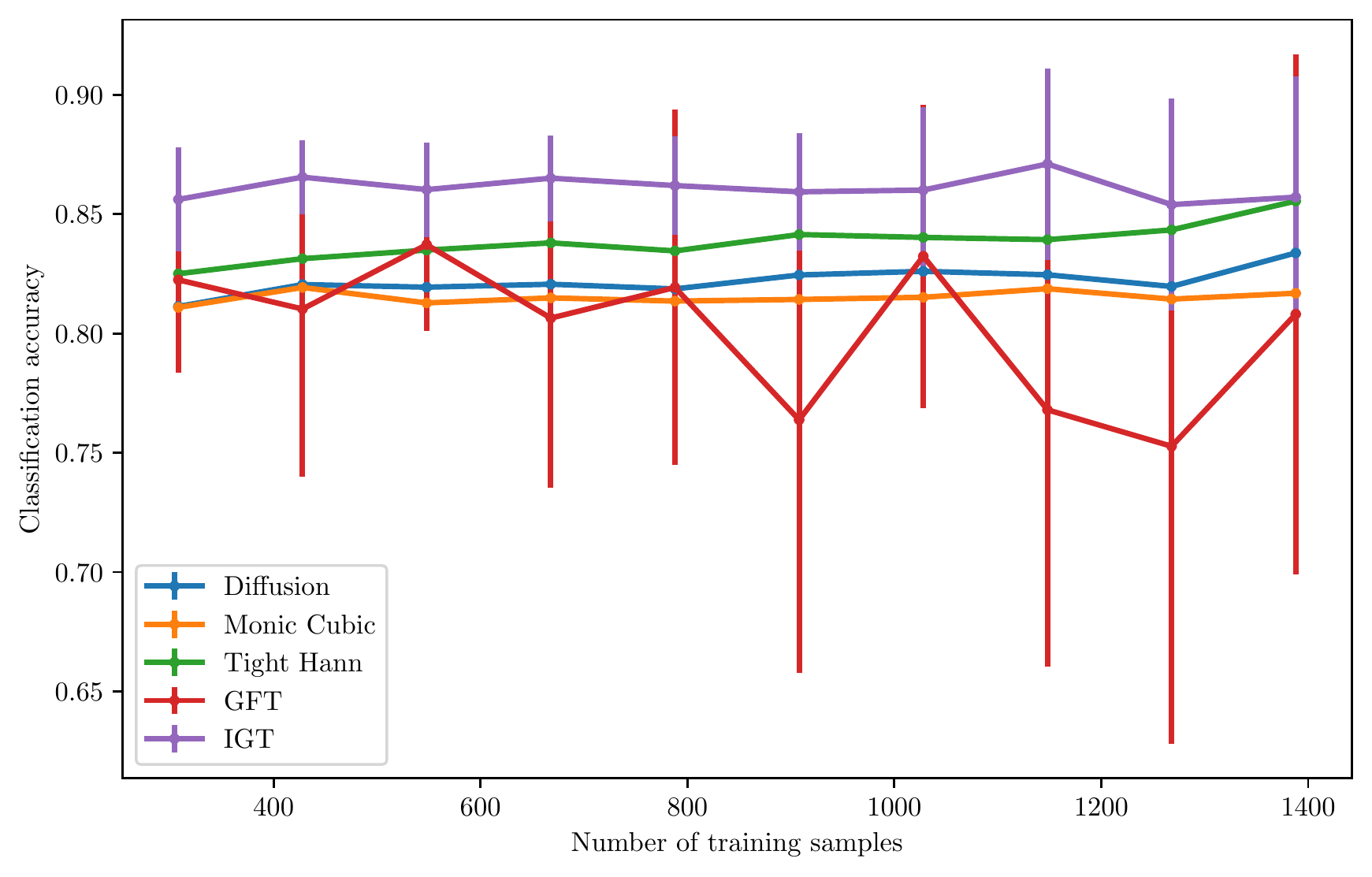}}
\caption{Authorship attribution.  We compare our IGT with various mother wavelets proposed in \citet{gama2018diffusion}. The accuracy is a function of the number of training samples.}
\label{author}
\end{center}
\vskip -0.2in
\end{figure}
\subsubsection{Facebook graph}
The dataset consists of a synthetic 234 nodes graph modeling some Facebook interactions. A diffusion process is initiated at some node, and the objective is to determine which community this original node belongs to. In order to make this dataset challenging, a fraction of the edge is dropped and the classification accuracy is reported for various probability of edge failure. $2\times10^3$ points are used from training and $2\times10^2$ points for testing. Figure \ref{fb} reports our performances. Obtaining 100\% on the test set, our method solves this dataset, which clearly outperforms  GST introduced in \citet{gama2018diffusion} for each wavelet family.
\begin{figure}[ht]
\vskip 0.2in
\begin{center}
\centerline{\includegraphics[width=\columnwidth]{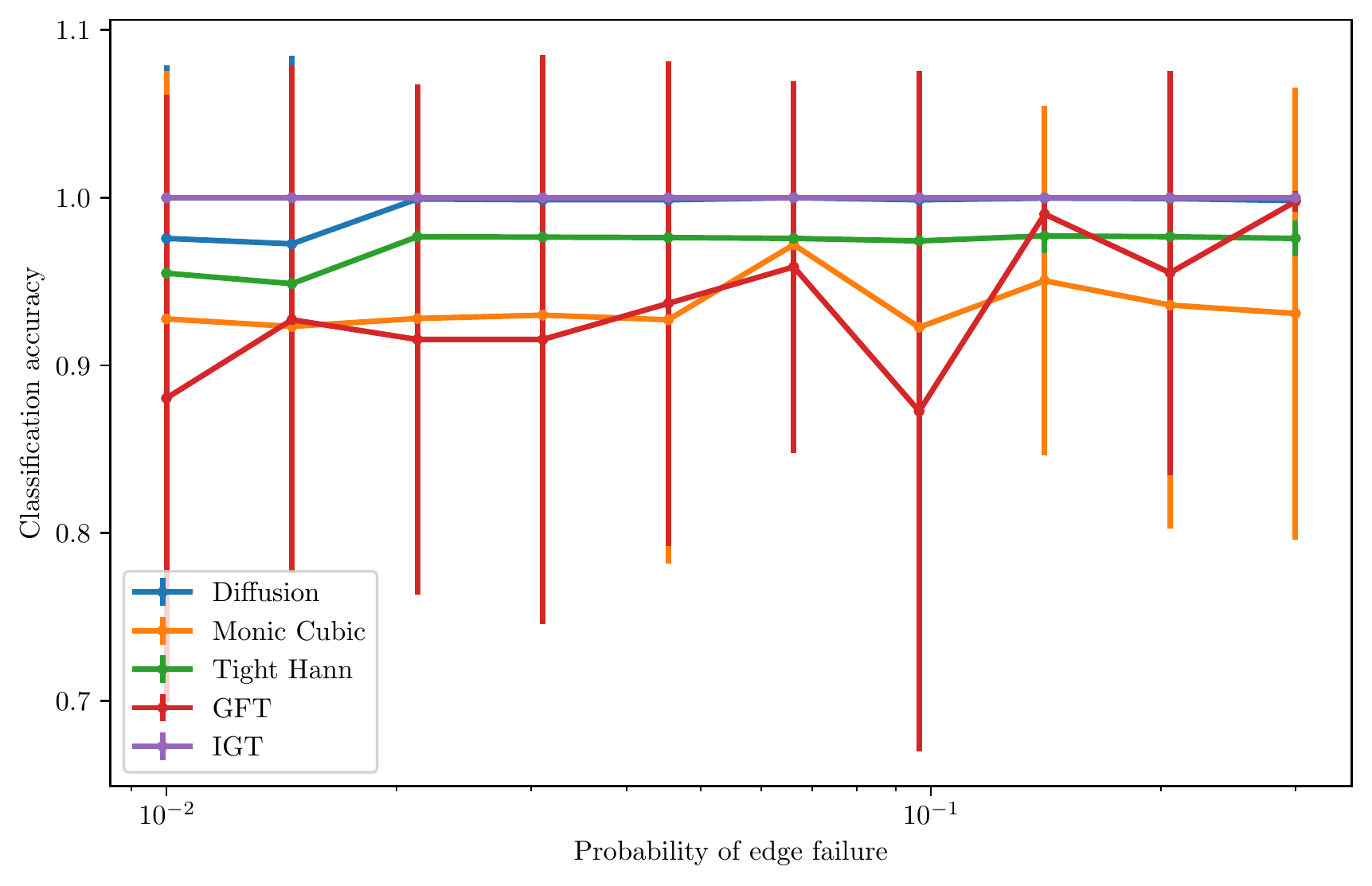}}
\caption{Facebook graph. We compare our IGT with various mother wavelets proposed in \citet{gama2018diffusion}.  The classification accuracy is reported as well as its variance, as a function of the fraction of edges dropped.}
\label{fb}
\end{center}
\vskip -0.2in
\end{figure}

\section{Conclusion}
In our work, we introduced the Inteferometric Graph Transform, which is an unsupervised, generic and interpretable representation that is guaranteed to obtain smooth features. We introduced a complex unitary transform for graphs analog to a Fourier transform. Thanks to our concave optimization procedure motivated by invariance and energy preservation considerations, we obtain performances  at the level of the state of the art on many various complex benchmarks. In vision settings, we observe that our method obtains analytic and well structured operators, which is surprising.

In a future work, we would like to extend this method to hybrid models, combining IGT and deep supervised GCN models, as done in \citet{oyallon2018scattering,Oyallon_2017_ICCV} for natural images on a regular grid. Another question which is still open, is to understand if it would be possible to provide a low dimensional mapping~\cite{jacobsen2017multiscale} of our spectral basis, similar to the index of a $N$-dimensional Fourier basis.

\paragraph*{Acknowledgement:}The author would like to thank Mathieu Andreux, Eugene Belilovsky, Bogdan Cirstea, Thomas Pumir, John Zarka for proofreading this manuscript, as well as Lucas Silve and Ahmed Mazari for helpful discussions related to the experiments. The author was supported by a GPU donation from NVIDIA.

\bibliography{example_paper}
\bibliographystyle{icml2020}

%%%%%%%%%%%%%%%%%%%%%%%%%%%%%%%%%%%%%%%%%%%%%%%%%%%%%%%%%%%%%%%%%%%%%%%%%%%%%%%
%%%%%%%%%%%%%%%%%%%%%%%%%%%%%%%%%%%%%%%%%%%%%%%%%%%%%%%%%%%%%%%%%%%%%%%%%%%%%%%
% DELETE THIS PART. DO NOT PLACE CONTENT AFTER THE REFERENCES!
%%%%%%%%%%%%%%%%%%%%%%%%%%%%%%%%%%%%%%%%%%%%%%%%%%%%%%%%%%%%%%%%%%%%%%%%%%%%%%%
%%%%%%%%%%%%%%%%%%%%%%%%%%%%%%%%%%%%%%%%%%%%%%%%%%%%%%%%%%%%%%%%%%%%%%%%%%%%%%%
\clearpage
\newpage
\appendix
\newpage
\section{Proof that $L$ is concave and positive.}\label{app}
 
 We will use the notations previously introduced as well as:
 \[z_p = U_nx_p\,.\]
 As $L(W)=\sum_p \ell(W,z_p)$, we will simply study $\forall z$,
 \[W\rightarrow \ell(W,z)=\Vert (\mathbf{I}-A)z\Vert^2-\Vert A|Wz|\Vert^2\,.\]
 
 Observe first that if $\Vert \{W,A\}\Vert\leq 1$, then $\Vert A\vert\leq 1$, and:
 \[\Vert Wz\Vert \leq \Vert (\mathbf{I}-A)z\Vert\]
 Thus,
 \[\Vert |Wz|\Vert \leq \Vert (\mathbf{I}-A)z\Vert\]
 and:
 \[\Vert A|Wz|\Vert \leq \Vert (\mathbf{I}-A)z\Vert\,.\]
 Consequently, $\ell(W,z)\geq 0,\forall z,\forall W\in\mathcal{C}$. Furthermore, let $W_1,W_2\in\mathcal{C}$ two operators and $0\leq \lambda\leq 1$. Then:
 \[|\big(\lambda W_1+(1-\lambda)W_2\big)z|\leq \lambda |W_1|z+(1-\lambda)|W_2|z\]
 where for $x\in \mathbb{R}^n$, $x\geq 0$ iff $x_i\geq 0$. If $Ax>0$ when $x>0$, then:
 \[A|\big(\lambda W_1+(1-\lambda)W_2\big)z|\leq \lambda A|W_1|z+(1-\lambda)A|W_2|z|\,,\]
 which implies (as all coordinates are non negative):
 \[\Vert A|\big(\lambda W_1+(1-\lambda)W_2\big)z|\Vert ^2\leq \Vert \lambda A|W_1|z+(1-\lambda)A|W_2|z|\Vert^2\,,\]
 yet one can use the fact that $z\rightarrow \Vert z\Vert^2$ is convex to conclude. Thus, $W\rightarrow \ell(W,z)$
 is convex in $W$.
 \section{Proof of Proposition 3.5}
 \begin{proof}
Observe that $\mathcal{F}$ linearly conjugates $\mathcal{C}$ to $\{\hat W\in \mathbb{C}^{(2d+1)\times K}, \sum_{k=1}^K |\hat W^k[i]|^2+|W^k[2d+1-i]|^2+|\hat A[i]|^2+|\hat A[2d+1-i]|^2\leq 1, \forall i\leq d,  \sum_{k=1}^K |W^k[2d+1]|^2+|\hat A[2d+1]|^2\leq1\}$. The extremal points of the latter are simply $\mathcal{S}'=\{\hat W\in \mathbb{C}^{(2d+1)\times K}, \sum_{k=1}^K |\hat W^k[i]|^2+|W^k[2d+1-i]|^2+|\hat A[i]|^2+|\hat A[2d+1-i]|^2=1,  \forall i\leq d, \sum_{k=1}^K |W^k[2d+1]|^2+|\hat A[2d+1]|^2=1\}$, which is conjugated by $\mathcal{F}^*$ to $\mathcal{S}$. But $\mathcal{S}'$ corresponds to the spectrum of an isometry, leading to the conclusion.
\end{proof}

\end{document}